\newtheorem{assumption}{Assumption}
\newtheorem{lemma}{Lemma}
\newtheorem{theorem}{Theorem}
\journal{Pattern Recognition Letters}
\long\def\@makecaption#1#2{%
  \vskip\abovecaptionskip
  \sbox\@tempboxa{#1: #2}%
  \ifdim \wd\@tempboxa >\hsize
    #1: #2\par
  \else
    \global \@minipagefalse
    \hb@xt@\hsize{\hfil\box\@tempboxa\hfil}%
  \fi
  \vskip\belowcaptionskip}
\begin{document}

\begin{frontmatter}

\title{Error Estimate and Convergence Analysis for Data Valuation}

\author[label1]{Zhangyong Liang}\ead{zyliang1994@tju.edu.cn}
\author[label2]{Huanhuan Gao\corref{cor1}}\ead{gao\_huanhuan@jlu.edu.cn}
\author[label3]{Ji Zhang\corref{cor1}}\ead{ji.zhang@unisq.edu.au}
\cortext[cor1]{Corresponding author.}

\affiliation[label1]{organization={National Center for Applied Mathematics, Tianjin University},  
            city={Tianjin},
            postcode={300072},
            country={PR China}}
\affiliation[label2]{organization={School of Mechanical and Aerospace Engineering, Jilin University},
            city={Changchun},
            postcode={130025},
            country={PR China}}
\affiliation[label3]{organization={School of Mathematics, Physics and Computing, University of Southern Queensland},
            city={Toowoomba, Queensland},
            postcode={4350},
            country={Australia}}

\vspace{-16pt}

\begin{abstract}
Data valuation aims to quantify the contribution of individual data points to learning objectives, yet most existing approaches based on the Shapley value~\cite{shapley1953,ghorbani2019,wangdata} require multiple training runs and thus lack unified theoretical guarantees. The recently proposed neural dynamic data valuation (NDDV) framework~\cite{liang2024neural} overcomes this limitation by casting data valuation as a discrete-time stochastic optimal control problem, enabling analysis via the stochastic maximum principle. In this work, we establish rigorous error estimates and convergence results for NDDV. Under standard Lipschitz and smoothness assumptions, we derive quadratic error bounds for loss differences that scale inversely with the time discretization and quadratically with control perturbations. We further prove that the expected squared gradient norm of the training loss vanishes asymptotically and that the associated meta loss achieves a sublinear convergence rate. These results provide the first mathematical guarantees for the stability and convergence of dynamic data valuation methods.

\end{abstract}

\begin{keyword}
Data valuation, Neural dynamic data valuation, Error estimation, Convergence analysis. 
\vspace{.1cm}
\end{keyword}

\end{frontmatter}

\section{Introduction}

Data valuation aims to quantify the contribution of individual data points to a prescribed learning objective. A classical formulation adopts the Shapley value from cooperative game theory~\cite{shapley1953, elkind2016game}, which has been widely used in recent data valuation methods~\cite{pei2020,ghorbani2019,wangdata,jia2019knn,kwon2021beta,ilyas2022datamodels,li2023robust}.
For a dataset $\mathcal{D}$ of size $N$, the Shapley value assigns to each point $(x_i,y_i)$ the average marginal utility over all subsets of $\mathcal{D}$, as follows
\begin{align*}
\label{eqn:shapley-merged}
\phi_{\mathrm{Shap}}(x_i, y_i; U)
:= \frac{1}{N}\sum_{j=1}^{N}\frac{1}{\binom{N-1}{\,j-1\,}}
\sum_{S \in \mathcal{D}^{\backslash i}_{\,j-1}}
\Big[ U\big(S\cup\{(x_i,y_i)\}\big) - U(S) \Big].
\end{align*}
where $\mathcal{D}^{\backslash i}_{\,j-1} := \{\, S \subseteq \mathcal{D}\setminus\{(x_i,y_i)\} : |S|=j-1 \,\}$ collects all size-$(j\!-\!1)$ subsets $S$ excluding $(x_i,y_i)$, and the outer average is taken uniformly over coalition sizes $j=1,\dots,N$.

While this formulation is conceptually appealing, it requires evaluating marginal contributions across multiple training runs, making unified error estimation or convergence analysis intractable.

To address this limitation, Liang et al.~\cite{liang2024neural} introduced the neural dynamic data valuation (NDDV) framework, which recasts data valuation as a discrete-time stochastic optimal control problem. Within this formulation, the forward state dynamics evolve under a controlled drift, and the corresponding adjoint variables arise naturally through the stochastic maximum principle (SMP)~\cite{kushner1965,peng1990smp}. This structure allows all data values to be computed simultaneously within a single training process, enabling mathematical analysis via the associated Hamiltonian system.

The objective of this work is to develop rigorous analytical guarantees for NDDV. Under standard Lipschitz and smoothness assumptions, we derive a quadratic error estimate for the discrete-time controlled system and establish stability with respect to control perturbations. Furthermore, we prove that the expected squared gradient norm of the training loss vanishes asymptotically and that the meta-level optimization achieves a sublinear convergence rate. These results provide, to our knowledge, the first mathematical error and convergence guarantees for dynamic data valuation.

The main contributions of this work are summarized as follows:
\begin{itemize}
\item \textbf{First rigorous error and convergence analysis}: We present the first formal error and convergence guarantees for the NDDV method, modeling data valuation as a discrete-time stochastic optimal control problem. By leveraging the Hamiltonian structure of the problem, we provide a comprehensive analysis.
\item \textbf{Quantitative stability and error bounds}: We establish precise stability and error estimates for loss differences under standard Lipschitz and smoothness conditions. Our results include quadratic bounds with respect to control perturbations and provide insights into the impact of time discretization on accuracy.
\item \textbf{Asymptotic stationarity of training dynamics}: We prove that, under appropriate step-size conditions, the expected squared gradient norm of the training loss converges to zero asymptotically. This ensures the asymptotic stationarity of the NDDV training dynamics.
\item \textbf{Sublinear convergence guarantee}: We demonstrate that the meta-level optimization associated with an $L$-smooth meta-loss achieves a sublinear convergence rate of $O(1/\sqrt{K})$, leading to $\varepsilon$-stationarity in $O(1/\varepsilon^2)$ iterations.
\end{itemize}

\section{Problem Formulation and Preliminaries}

In this section, we introduce the mathematical framework underlying neural dynamic data valuation (NDDV). 
The method reformulates data valuation as a discrete-time stochastic optimal control problem.
whose analysis relies on the associated Hamiltonian system.

\subsection{Controlled Dynamics}

Let $N$ denote the dataset size and let $T\in\mathbb{N}$ be the number of time steps with
step size $\Delta t>0$. For each data point $i=1,\dots,N$, its state $X_{i,t}\in\mathbb{R}^d$
evolves according to the controlled stochastic recursion
\begin{equation}
\label{eq:dynamics}
    X_{i,t+1}
    = X_{i,t} + b\bigl(X_{i,t},\mu_t,\psi_{i,t}\bigr)\,\Delta t 
      + \sigma\,\Delta W_{t},
\end{equation}
where $\psi_{i,t}$ is the control at time $t$, $\sigma$ is a constant diffusion coefficient,
and $\Delta W_t$ are independent Gaussian increments.  
The mean state at time $t$ is denoted by
$
    \mu_t := \frac1N \sum_{i=1}^N X_{i,t}.
$
Throughout, the drift takes the form
\begin{equation}
    b(X_{i,t},\mu_t,\psi_{i,t}) = a(\mu_t - X_{i,t}) + \psi_{i,t},\quad a>0,
\end{equation}
where $a>0$ is a fixed parameter.

\subsection{Cost Functional}

Each trajectory is evaluated through a running cost $R_i(X,\mu,\psi)$ and a terminal 
cost $\Phi_i(X,\mu,\psi)$.  
The discrete objective functional is
\begin{align}
\label{eq:objective}
    \mathcal{L}(\psi,\theta)
    &:= \frac1N \sum_{i=1}^N \Bigg[
        \sum_{t=0}^{T-1} R_i(X_{i,t},\mu_t,\psi_{i,t}) \nonumber\\
        &\quad+ \mathcal{V}\!\left(\Phi_i(X_{i,T},\mu_T,\psi_{i,T});\theta\right) \Phi_i(X_{i,T},\mu_T,\psi_{i,T})
    \Bigg],
\end{align}
where $\mathcal{V}(\cdot;\theta)$ is a weighting function parametrized by~$\theta$.

\subsection{Hamiltonian and Adjoint Variables}

Following the stochastic maximum principle~\cite{kushner1965}, we introduce the Hamiltonian
\begin{equation}
\label{eq:Hamiltonian}
    \mathcal{H}(X_t,Y_t,Z_t,\mu_t,\psi_t)
    = \bigl[a(\mu_t-X_t)+\psi_t\bigr]\cdot Y_t 
    + \sigma^\top Z_t 
    - R(X_t,\mu_t,\psi_t),
\end{equation}
with adjoint variables $(Y_{i,t},Z_{i,t})$ satisfying the backward recursion
\begin{align*}
Y_{i,T} &= -\frac{1}{N}\nabla_X\!\Big(\mathcal{V}(\Phi_i;\theta)\,\Phi_i\Big),  \\
Y_{i,t}&= \nabla_X b(X_{i,t},\mu_t,\psi_{i,t})^\top Y_{i,t+1}
+ \frac{1}{N}\nabla_X R(X_{i,t},\mu_t,\psi_{i,t}),
\end{align*}
for $t=T-1,\dots,0$.

\subsection{Dynamic Valuation Definition}

The dynamic value of point $(x_i,y_i)$ is given by
\begin{equation}
\label{eq:dynamic_value}
    \phi(x_i,y_i;U_i)
    := U_i - \frac{1}{N-1} \sum_{j\ne i} U_j, \quad U_i := -X_{i,T}\cdot Y_{i,T}.
\end{equation}
which enables all valuations to be computed within a single training process.

Eq~\ref{eq:dynamic_value} provides a discrete-time counterpart of the underlying continuous dynamics, yielding a unified framework that enables error estimation and convergence analysis for data valuation within a single training process. 
Using the Half Moons dataset as an example, NDDV is effective for standard data-valuation tasks—detecting corrupted data, removing high- and low-value data, and adding high- and low-value data (see Figure \ref{fig:moon_value})—while substantially improving computational efficiency.

\begin{figure}[h]
    \centering
    \includegraphics[width=0.45\textwidth]{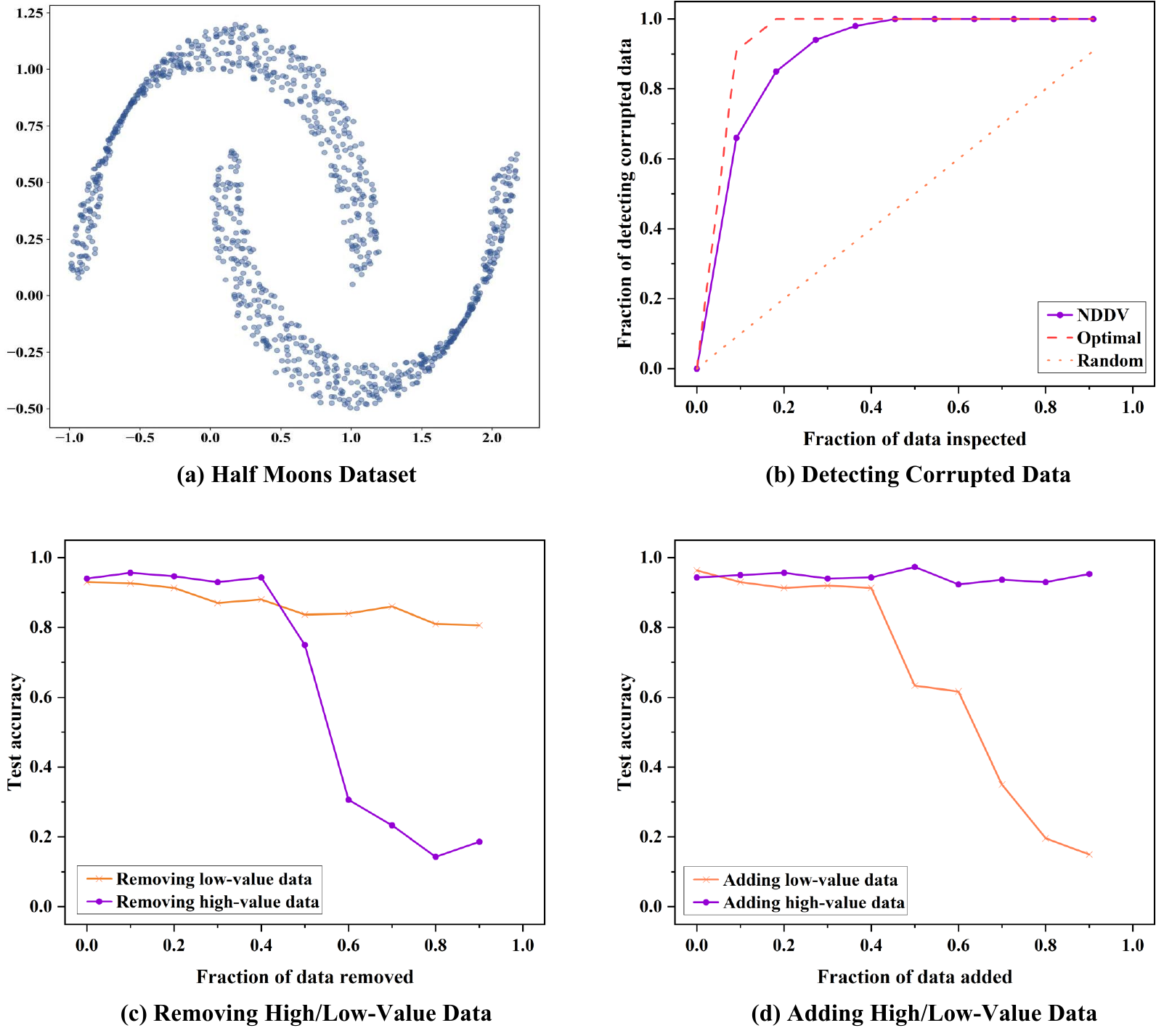}
    \caption{\textbf{The validity of NDDV for data valuation.} On the Half Moons dataset, NDDV effectively evaluates data value in experiments on corrupted-data detection and data removal/addition.}
    \label{fig:moon_value}
\end{figure}

\subsection{Regularity Assumptions}

The analysis relies on the following standard assumptions.

\begin{assumption}
\label{assump:terminal}
The terminal cost $\Phi$ and its gradient are Lipschitz continuous, and
$\Phi$ is twice continuously differentiable.  
Moreover,
\[
    \|\mathcal{V}(\cdot)\bigl(\Phi(X)-\Phi(X')\bigr)\|
    + \|\mathcal{V}(\cdot)\bigl(\nabla\Phi(X)-\nabla\Phi(X')\bigr)\|
    \le K \|X-X'\|.
\]
for some $K>0$.
\end{assumption}

\begin{assumption}
\label{assump:regularity}
The functions $R$ and $b$ are continuous in $t$, twice differentiable and 
Lipschitz in $X$, uniformly in $t$ and $\psi$, and
\begin{align*}
\|b(X)-b(X')\| &+ \|\nabla b(X)-\nabla b(X')\|
+ |R(X)-R(X')| \\
&+ \|\nabla R(X)-\nabla R(X')\|\le K \|X-X'\|.
\end{align*}
The diffusion $\sigma$ is constant.
\end{assumption}

These assumptions ensure well-posedness of the forward–backward system and enable the 
derivation of the error and convergence estimates presented in the subsequent sections.

\section{Error Estimate}

In this section we establish a quantitative stability estimate for the objective functional
$\mathcal{L}$ with respect to perturbations of the control. Throughout, Assumptions
\ref{assump:terminal}–\ref{assump:regularity} are in force.

Let $\psi$ and $\psi'$ denote two admissible control sequences.  
We aim to bound the loss difference $\mathcal{L}(\psi)-\mathcal{L}(\psi')$ in terms of the
discrepancy between these controls.  
The analysis relies on the forward recursion \eqref{eq:dynamics}, the adjoint recursion, and
properties of the Hamiltonian \eqref{eq:Hamiltonian}.

\subsection{Auxiliary Lemma: Discrete Grönwall Inequality}

We begin with a standard discrete Grönwall estimate.

\begin{lemma}[Discrete Grönwall]
\label{lem:gronwall_discrete}
Let $(u_t)_{t=0}^T$ and $(w_t)_{t=0}^{T-1}$ be nonnegative sequences satisfying
\[
    u_{t+1} \le K u_t + w_t, \qquad t=0,\dots,T-1,
\]
for some $K\ge0$. Then
\begin{equation}
    u_t \le \max\{1,K^T\}\Bigl(u_0 + \sum_{s=0}^{T-1} w_s\Bigr),
    \qquad \forall\, 0\le t\le T.
\end{equation}
\end{lemma}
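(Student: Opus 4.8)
The plan is to iterate the one-step recursion into a closed-form bound and then replace every power of $K$ by the uniform constant $\max\{1,K^T\}$. First I would establish, by induction on $t$ (equivalently, by simply unrolling the recursion), that for every $0\le t\le T$,
\[
    u_t \le K^{t} u_0 + \sum_{s=0}^{t-1} K^{\,t-1-s}\, w_s,
\]
with the usual empty-sum convention. The base case $t=0$ reduces to $u_0\le u_0$, and the inductive step substitutes the hypothesis into $u_{t+1}\le K u_t + w_t$ and collects terms: multiplying the bound for $u_t$ by $K$ shifts each exponent up by one, and the extra additive $w_t$ supplies the $s=t$ summand with coefficient $K^{0}=1$.

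The second step is to bound each coefficient uniformly. Since $0\le t\le T$ we have $K^{t}\le \max\{1,K^T\}$, and for $0\le s\le t-1$ the exponent $t-1-s$ lies in $\{0,\dots,T-1\}$, so $K^{\,t-1-s}\le \max\{1,K^T\}$ as well. Using the nonnegativity of $u_0$ and of each $w_s$, I can factor the common bound $\max\{1,K^T\}$ out of both the $K^{t}u_0$ term and the sum, and then enlarge the summation range from $s\le t-1$ to $s\le T-1$ (again invoking $w_s\ge 0$) to arrive at the stated inequality.

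The only point requiring care is handling the regimes $K\le 1$ and $K>1$ simultaneously, which is precisely what the comparison $K^{m}\le \max\{1,K^T\}$ for $0\le m\le T$ accomplishes: when $K\le 1$ each power is bounded by $1$, and when $K>1$ each is bounded by $K^T$. Because $m\mapsto K^{m}$ is monotone for fixed $K\ge 0$, these comparisons are immediate, so I do not anticipate any genuine obstacle—the result is essentially a bookkeeping argument once the closed-form iterate has been derived.
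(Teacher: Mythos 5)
Your proposal is correct and follows essentially the same route as the paper's own proof: unroll the recursion to obtain $u_t \le K^{t}u_0 + \sum_{s=0}^{t-1}K^{\,t-1-s}w_s$, bound every power $K^{m}$ with $0\le m\le T$ by $\max\{1,K^{T}\}$, and enlarge the summation range using nonnegativity of the $w_s$. The only difference is that you spell out the induction and the two regimes $K\le 1$ versus $K>1$ explicitly, which the paper leaves implicit.
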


\begin{proof}
Iterating the recursion gives,
\[
u_t \;\le\; K^{t}u_0+\sum_{s=0}^{t-1}K^{\,t-1-s}w_s,\qquad t=0,\dots,T.
\]
Since $K^{t}\le \max\{1,K^{T}\}$ and $K^{\,t-1-s}\le \max\{1,K^{T}\}$, we obtain
\begin{align*}
u_t \;&\le\; \max\{1,K^{T}\}\,u_0+\max\{1,K^{T}\}\sum_{s=0}^{t-1}w_s \\
\;&\le\; \max\{1,K^{T}\}\!\left(u_0+\sum_{s=0}^{T-1}w_s\right).
\end{align*}
which proves the claim.
\end{proof}

\subsection{Bound on the Adjoint Variable}

We first control the magnitude of the co-state variables.

\begin{lemma}
\label{lem:Y_bound}
There exists a constant $C>0$, independent of $i$, $t$, and $N$, such that
\begin{equation}
    \|Y_{i,t}\| \le \frac{C}{N},
    \qquad \forall\, i=1,\dots,N,\quad t=0,\dots,T.
\end{equation}
\end{lemma}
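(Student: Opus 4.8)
The plan is to bound $\|Y_{i,t}\|$ by backward induction on $t$, starting from the terminal condition and propagating through the adjoint recursion, with the discrete Grönwall inequality (Lemma~\ref{lem:gronwall_discrete}) doing the bookkeeping. First I would treat the terminal step $t=T$. From the definition $Y_{i,T} = -\tfrac{1}{N}\nabla_X\bigl(\mathcal{V}(\Phi_i;\theta)\,\Phi_i\bigr)$, Assumption~\ref{assump:terminal} supplies a uniform Lipschitz-type bound on $\mathcal{V}(\cdot)\nabla\Phi$, which (together with boundedness of the relevant quantities) yields $\|Y_{i,T}\| \le K/N$ for some constant $K$ independent of $i$ and $N$.

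Next I would handle the recursion for $t<T$. Using the backward relation
\[
    Y_{i,t} = \nabla_X b(X_{i,t},\mu_t,\psi_{i,t})^\top Y_{i,t+1}
      + \tfrac{1}{N}\nabla_X R(X_{i,t},\mu_t,\psi_{i,t}),
\]
I would apply the triangle inequality and the sub-multiplicativity of the operator norm to get $\|Y_{i,t}\| \le \|\nabla_X b\|\,\|Y_{i,t+1}\| + \tfrac{1}{N}\|\nabla_X R\|$. Assumption~\ref{assump:regularity} guarantees that $\nabla_X b$ and $\nabla_X R$ are uniformly bounded (Lipschitz continuity in $X$, uniform in $t$ and $\psi$, combined with continuity in $t$ over the finite horizon, gives a common bound $K$). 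This produces the scalar recursion $\|Y_{i,t}\| \le K\,\|Y_{i,t+1}\| + \tfrac{K}{N}$.

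Since this recursion runs \emph{backward} in time, I would re-index it to fit the forward form required by Lemma~\ref{lem:gronwall_discrete}: set $u_s := \|Y_{i,T-s}\|$ and $w_s := K/N$, so that $u_{s+1} \le K u_s + w_s$ with $u_0 = \|Y_{i,T}\| \le K/N$. Applying the discrete Grönwall estimate then gives
\[
    u_s \le \max\{1,K^T\}\Bigl(u_0 + \sum_{r=0}^{T-1} \tfrac{K}{N}\Bigr)
         \le \frac{K\,\max\{1,K^T\}\,(1+T)}{N},
\]
uniformly in $s$, and hence $\|Y_{i,t}\| \le C/N$ with $C := K\,\max\{1,K^T\}\,(1+T)$ independent of $i$, $t$, and $N$.

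The main obstacle is not the Grönwall mechanics—those are routine—but rather verifying the uniform boundedness of the Jacobians $\nabla_X b$ and $\nabla_X R$ that furnishes the constant $K$ in the one-step recursion. Assumption~\ref{assump:regularity} is stated as a Lipschitz condition on these gradients rather than an explicit bound on the gradients themselves; I would need to argue that Lipschitz continuity in $X$ together with continuity in $t$ on the compact horizon $\{0,\dots,T\}$ (and the structural form $b = a(\mu_t - X_{i,t}) + \psi_{i,t}$, whose $X$-gradient is simply $-aI$) yields a finite uniform bound. The $1/N$ scaling then survives cleanly because the terminal condition and the running-cost term each carry an explicit factor of $1/N$, and the Grönwall amplification depends only on $T$ and $K$, not on $N$.
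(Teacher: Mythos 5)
Your proposal follows essentially the same route as the paper's proof: terminal bound $\|Y_{i,T}\|\le K/N$ from Assumption~\ref{assump:terminal}, the one-step recursion $\|Y_{i,t}\|\le K\|Y_{i,t+1}\|+K/N$ from Assumption~\ref{assump:regularity}, and the discrete Gr\"onwall inequality to conclude $\|Y_{i,t}\|\le C/N$ with $C=K\max\{1,K^T\}(1+T)$. Your treatment is in fact slightly more careful than the paper's in two respects—you explicitly re-index the backward recursion to match the forward form of Lemma~\ref{lem:gronwall_discrete}, and you flag that Assumption~\ref{assump:regularity} states Lipschitz continuity of the gradients rather than their boundedness (resolving it via the structural form $\nabla_X b=-aI$ plus continuity on the finite horizon)—but these are refinements of the same argument, not a different one.
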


\begin{proof}
From the terminal condition and Assumption~\ref{assump:terminal},  
\[
    \|Y_{i,T}\| = \frac{1}{N}
        \bigl\|\nabla\!\bigl(\mathcal{V}(\Phi_i;\theta)\Phi_i\bigr)\bigr\|
        \le \frac{K}{N}.
\]
For $t<T$, the adjoint recursion gives
\[
    Y_{i,t}
    = \nabla_X b(X_{i,t},\mu_t,\psi'_{i,t})^\top Y_{i,t+1}
      + \frac{1}{N} \nabla_X R(X_{i,t},\mu_t,\psi_{i,t}),
\]
and Assumption~\ref{assump:regularity} implies
\[
    \|Y_{i,t}\| \le K\|Y_{i,t+1}\| + \frac{K}{N},
\]
Applying the discrete Grönwall inequality yields
\begin{align*}
u_t \le \max(1,K^T)\!\left(u_T+\sum_{s=t}^{T-1}w_s\right)\le \frac{K\max(1,K^T)(1+T)}{N} = \frac{C}{N}.
\end{align*}
where $C$ is a constant independent of $i,t,N$.  
Hence $\|Y_{i,t}\|\le C/N$ for all $i,t$.
\end{proof}

\subsection{Main Stability Estimate}

We now compare the objective under two controls.

\begin{lemma}
\label{lem:error_estimate}
There exists a constant $C>0$ such that for all admissible $\psi,\psi'$,
\begin{equation}
    |\mathcal{L}(\psi)-\mathcal{L}(\psi')|
    \le \frac{C}{N}
        \sum_{i=1}^N \sum_{t=0}^{T-1}
        \|\psi_{i,t}-\psi'_{i,t}\|^2.
\end{equation}
\end{lemma}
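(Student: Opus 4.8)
The plan is to write $\mathcal{L}(\psi)-\mathcal{L}(\psi')$ as a first-order expansion organized around the Hamiltonian \eqref{eq:Hamiltonian} and then to show that every term in that expansion is controlled quadratically by the control perturbation. I would couple the two systems by driving \eqref{eq:dynamics} with the same noise increments $\Delta W_t$ for both $\psi$ and $\psi'$, so the diffusion cancels in the difference and the analysis reduces to a deterministic perturbation argument. Writing $\Delta X_{i,t}:=X_{i,t}-X'_{i,t}$, $\Delta Y_{i,t}:=Y_{i,t}-Y'_{i,t}$ and $\Delta\psi_{i,t}:=\psi_{i,t}-\psi'_{i,t}$, the objective is to represent $\mathcal{L}(\psi)-\mathcal{L}(\psi')$ through $\mathcal{H}$ and its gradients and then substitute sensitivity estimates that bound $\Delta X$ and $\Delta Y$ by the accumulated control discrepancy.

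First I would establish the forward sensitivity estimate. Subtracting the two copies of \eqref{eq:dynamics} and using the explicit drift $b(X,\mu,\psi)=a(\mu-X)+\psi$ yields a linear recursion for $\Delta X_{i,t}$ forced by $\Delta\psi_{i,t}$ and coupled through $\Delta\mu_t=\frac1N\sum_j\Delta X_{j,t}$. Taking norms, invoking the Lipschitz bound of Assumption \ref{assump:regularity}, and controlling the mean-field term gives a scalar recursion of Grönwall type, so Lemma \ref{lem:gronwall_discrete} delivers $\|\Delta X_{i,t}\|\le C\sum_{s=0}^{T-1}\|\Delta\psi_{i,s}\|$ with $C=C(a,T)$. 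A symmetric backward argument applied to the adjoint recursion — using the terminal Lipschitz bound of Assumption \ref{assump:terminal}, the gradient-Lipschitz bound of Assumption \ref{assump:regularity}, and the a priori estimate $\|Y_{i,t}\|\le C/N$ from Lemma \ref{lem:Y_bound} — produces the matching bound $\|\Delta Y_{i,t}\|\le\frac{C}{N}\sum_{s=0}^{T-1}\|\Delta\psi_{i,s}\|$.

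Next I would derive the fundamental representation of the loss difference by summing the adjoint recursion against $\Delta X_{i,t}$ (discrete summation by parts). Inserting the definitions of $\mathcal{H}$ and of the terminal cost, $\mathcal{L}(\psi)-\mathcal{L}(\psi')$ reorganizes into a Hamiltonian-difference term, the contractions $\nabla_X\mathcal{H}\cdot\Delta X$ and $\nabla_Y\mathcal{H}\cdot\Delta Y$, a terminal boundary term $Y_{i,T}\cdot\Delta X_{i,T}$, and a second-order remainder $\mathcal{R}_T$ from Taylor-expanding the terminal cost. The terminal boundary term together with $\mathcal{R}_T$ is bounded by $\frac{C}{N}\sum_i\|\Delta X_{i,T}\|^2$ via the twice-differentiability and gradient-Lipschitz property of Assumption \ref{assump:terminal}; the $\sigma^\top\Delta Z$ contribution is absorbed by Young's inequality; and the gradient contractions are bounded by $C\sum_{i,t}(\|\Delta X_{i,t}\|^2+\|\Delta\psi_{i,t}\|^2)$ using Assumption \ref{assump:regularity} and the co-state bound of Lemma \ref{lem:Y_bound}. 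Substituting the forward and backward sensitivity estimates and applying Cauchy--Schwarz in the form $\bigl(\sum_s\|\Delta\psi_{i,s}\|\bigr)^2\le T\sum_s\|\Delta\psi_{i,s}\|^2$ then converts every $\|\Delta X\|^2$ and $\|\Delta Y\|^2$ into an average of squared control perturbations, collapsing the estimate to $\frac{C}{N}\sum_{i,t}\|\Delta\psi_{i,t}\|^2$.

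The delicate point — and the one I expect to be the main obstacle — is the genuinely first-order Hamiltonian-difference term. After summation by parts the cost difference retains $-\frac{1}{N}\sum_{i,t}\bigl[b(\cdot,\psi_{i,t})-b(\cdot,\psi'_{i,t})\bigr]\cdot Y_{i,t}=-\frac{1}{N}\sum_{i,t}\Delta\psi_{i,t}\cdot Y_{i,t}$, which is only first order in $\Delta\psi$ even after using $\|Y_{i,t}\|\le C/N$. Reducing it to a quadratic quantity requires exploiting that admissible controls are stationary for the Hamiltonian, $\nabla_\psi\mathcal{H}=Y-\nabla_\psi R=0$, so that the first variation $\mathcal{H}(\cdot,\psi)-\mathcal{H}(\cdot,\psi')$ vanishes to first order and only its quadratic remainder survives; making this cancellation rigorous — rather than merely bounding each piece — is where the argument must be most careful. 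A secondary difficulty is the mean-field coupling: since $\mu_t$ aggregates all states, the recursion for $\Delta X_{i,t}$ does not decouple across $i$, so the Grönwall step should be carried out on the empirical second moment $\frac1N\sum_i\|\Delta X_{i,t}\|^2$ to keep the constant $C$ independent of $N$.
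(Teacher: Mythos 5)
Your plan follows the same route as the paper's proof: the Hamiltonian decomposition of the loss difference, Gr\"onwall-type sensitivity estimates for $\Delta X$ and $\Delta Y$ (via Lemma~\ref{lem:gronwall_discrete} and the co-state bound of Lemma~\ref{lem:Y_bound}), Lipschitz bounds from Assumptions~\ref{assump:terminal}--\ref{assump:regularity}, and finally Cauchy--Schwarz to collapse everything to $\frac{C}{N}\sum_{i,t}\|\psi_{i,t}-\psi'_{i,t}\|^2$. Two of your refinements are not spelled out in the paper but are exactly what is needed to keep the prefactor at $C/N$ rather than $C$: the extra $1/N$ in the backward sensitivity bound $\|\Delta Y_{i,t}\|\le\frac{C}{N}\sum_s\|\Delta\psi_{i,s}\|$, and running the Gr\"onwall step on the empirical second moment to control the mean-field coupling through $\mu_t$ with a constant independent of $N$.

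The obstacle you flag is real, and it is precisely the step the paper's proof does not address: after the summation by parts, the term $-\sum_{i,t}\bigl(\mathcal{H}_i(\psi_{i,t})-\mathcal{H}_i(\psi'_{i,t})\bigr)$ is genuinely first order in $\Delta\psi$, and the paper absorbs it into the final quadratic bound without comment. Your proposed cure --- invoking $\nabla_\psi\mathcal{H}=0$ so that the first variation cancels --- does not apply as the lemma is stated, because it quantifies over \emph{all} admissible $\psi,\psi'$, not over controls that are stationary points of the Hamiltonian. For a generic pair the loss difference really is first order in the perturbation (a small constant shift of the control changes $\mathcal{L}$ by $O(\varepsilon)$ unless $\nabla_\psi\mathcal{L}$ vanishes), so a two-sided bound that is quadratic in $\Delta\psi$ cannot hold without either (i) restricting to controls satisfying the stationarity condition of the maximum principle, or (ii) retaining the Hamiltonian-difference term on the right-hand side, as in the classical error estimate for the method of successive approximations. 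A secondary issue, which you partly sidestep by coupling the noise, is that the paper's bound on $\sum_{i,t}\sigma^\top\Delta Z_{i,t}$ leaves behind the constant $C_\varepsilon T\|\sigma\|^2$, which does not vanish as $\psi'\to\psi$ and therefore cannot appear in the final estimate; your same-noise coupling, under which the diffusion cancels in the difference, is the cleaner way to remove it. In short, your instinct about where the difficulty lies is correct, but neither your sketch nor the paper's proof closes that first-order gap for arbitrary admissible controls; the term must either be kept explicitly or be killed by an additional hypothesis.
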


\begin{proof}
Let $\Delta X_{i,t}=X_{i,t}-X'_{i,t}$, 
$\Delta Y_{i,t}=Y_{i,t}-Y'_{i,t}$, 
$\Delta Z_{i,t}=Z_{i,t}-Z'_{i,t}$.  
Using the decomposition of the loss difference through the Hamiltonian identity yields
\begin{align*}
    \frac{1}{N}\sum_{i,t}
    \bigl[R_i(\cdot,\psi_{i,t})-R_i(\cdot,\psi'_{i,t})\bigr]
    &= -\sum_{i,t}\bigl(\mathcal{H}_i(\psi_{i,t})-\mathcal{H}_i(\psi'_{i,t})\bigr)\\
    &+ \sum_{i,t}\!\Bigl(
        \nabla_X\mathcal{H}_i\cdot\Delta X_{i,t}+ \nabla_Y\mathcal{H}_i\cdot\Delta Y_{i,t+1}
    \Bigr) \\
      &+ \mathcal{R}_T,
\end{align*}
where $\mathcal{R}_T$ is a terminal error term.
By Lipschitz continuity (Assumptions~\ref{assump:terminal}–\ref{assump:regularity}),
\[
    |\mathcal{R}_T|
    \le \frac{C}{N} \sum_{i}\|\Delta X_{i,T}\|^2.
\]
Similarly,
\[
    \Bigl|\sum_{i,t}\sigma^\top \Delta Z_{i,t}\Bigr|
    \le \varepsilon\sum_{i,t}\|\Delta Z_{i,t}\|^2 + C_\varepsilon T\|\sigma\|^2,
\]
and
\[
    \Bigl|\sum_{i,t}\nabla_X\mathcal{H}_i\cdot\Delta X_{i,t}\Bigr|
    + \Bigl|\sum_{i,t}\nabla_Y\mathcal{H}_i\cdot\Delta Y_{i,t+1}\Bigr|
    \le C \sum_{i,t}\bigl(\|\Delta X_{i,t}\|^2+\|\psi_{i,t}-\psi'_{i,t}\|^2\bigr).
\]

The dynamics and adjoint recursions imply
\[
    \|\Delta X_{i,t}\|
    + \|\Delta Y_{i,t}\|
    \le
    C\sum_{s=0}^{T-1}\|\psi_{i,s}-\psi'_{i,s}\|,
\]
Substituting these bounds into the Hamiltonian identity and absorbing constants yields
\[
    \frac{1}{N}\!\sum_{i,t}
    \bigl[R_i(X_{i,t},\mu_t,\psi_{i,t})-R_i(X'_{i,t},\mu'_t,\psi'_{i,t})\bigr]
    \le
    \frac{C}{N}
    \sum_{i,t}\|\psi_{i,t}-\psi'_{i,t}\|^2.
\]
Hence the stated inequality follows.
\end{proof}



Lemma~\ref{lem:error_estimate} shows that the NDDV loss is quadratically stable with respect to control perturbations, which is crucial for the convergence analysis in the next section.

\section{Convergence Analysis}

In this section, we establish the convergence of the NDDV optimization scheme, showing asymptotic vanishing of the training-loss gradient and sublinear $\mathcal{O}(1/\sqrt{K})$ convergence for meta-optimization.


\subsection{Preliminary Lemma}

The proof relies on a standard sequence argument.

\begin{lemma}
\label{lem:seq_convergence}
Let $\{a_n\}_{n\ge1}$ and $\{b_n\}_{n\ge1}$ be nonnegative sequences such that:
(i) $\sum_{n=1}^\infty a_n = \infty$, 
(ii) $\sum_{n=1}^\infty a_n b_n < \infty$, and 
(iii) $|b_{n+1}-b_n|\le K a_n$ for some $K>0$.  
Then $b_n\to0$ as $n\to\infty$.
\end{lemma}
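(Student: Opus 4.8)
The plan is to argue by contradiction, combining a $\liminf$ estimate with a down-crossing argument that exploits the slow-variation hypothesis (iii). First I would establish that $\liminf_{n\to\infty} b_n = 0$. Indeed, if instead $\liminf_{n} b_n = 2\delta > 0$, then there is an $N$ with $b_n \ge \delta$ for all $n \ge N$, so that $\sum_{n\ge N} a_n b_n \ge \delta \sum_{n\ge N} a_n = \infty$ by hypothesis (i), contradicting (ii). Hence $\liminf_n b_n = 0$.

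Next, suppose toward a contradiction that $b_n \not\to 0$, so that $\limsup_n b_n = L > 0$, and fix thresholds $0 < \alpha < \beta < L$. Since $\liminf_n b_n = 0 < \alpha$ and $\limsup_n b_n = L > \beta$, the sequence exceeds $\beta$ infinitely often and drops below $\alpha$ infinitely often. I would therefore extract infinitely many disjoint down-crossing intervals: pick $s_1$ with $b_{s_1}\ge\beta$, let $u_1$ be the first index after $s_1$ with $b_{u_1}\le\alpha$, then pick $s_2 > u_1$ with $b_{s_2}\ge\beta$, and iterate. By construction each interval $[s_k,u_k)$ satisfies $b_{s_k}\ge\beta$, $b_{u_k}\le\alpha$, and $b_j > \alpha$ for every $s_k\le j < u_k$.

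On each such interval, telescoping the decrease and invoking (iii) gives $\beta-\alpha \le b_{s_k}-b_{u_k} \le \sum_{j=s_k}^{u_k-1}|b_{j+1}-b_j| \le K\sum_{j=s_k}^{u_k-1} a_j$, so that $\sum_{j=s_k}^{u_k-1} a_j \ge (\beta-\alpha)/K$. Since $b_j > \alpha$ throughout the interval, it follows that $\sum_{j=s_k}^{u_k-1} a_j b_j \ge \alpha(\beta-\alpha)/K > 0$. Summing this fixed positive lower bound over the infinitely many disjoint intervals forces $\sum_n a_n b_n = \infty$, contradicting (ii). Therefore $\limsup_n b_n = 0$, and together with $b_n\ge 0$ this yields $b_n\to 0$.

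The delicate point, and the step I expect to require the most care, is the choice of which crossing to use. A naive up-crossing from $\alpha$ to $\beta$ begins at an index where $b$ is small, and since $a_n$ is not assumed to vanish, a single large step could carry the entire $a$-mass of the interval while $b$ there is negligible, so the lower bound $\sum a_j b_j \gtrsim \alpha\sum a_j$ could fail. The \emph{down}-crossing is the correct device precisely because $b_j$ remains above $\alpha$ on the whole interval, including its initial index where $b\ge\beta$, so no boundary term spoils the estimate. I would also emphasize that (iii) enters only through the total-variation inequality, and that the argument requires no assumption that $a_n\to0$.
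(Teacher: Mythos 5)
Your proof is correct. The $\liminf$ step is immediate from (i) and (ii); the down-crossing construction is valid (the intervals $[s_k,u_k)$ are pairwise disjoint since $s_{k+1}>u_k$, and infinitely many exist because $b_n$ exceeds $\beta$ and drops below $\alpha$ each infinitely often); the telescoping bound $\beta-\alpha\le K\sum_{j=s_k}^{u_k-1}a_j$ combined with $b_j>\alpha$ on the interval gives the fixed positive contribution $\alpha(\beta-\alpha)/K$ to $\sum_n a_nb_n$, contradicting (ii). Your remark about why the down-crossing rather than the up-crossing is the right device is exactly the point on which naive versions of this argument founder, and your observation that no decay of $a_n$ is needed is also correct. (The only cosmetic caveat is the case $\limsup_n b_n=+\infty$, where one simply picks any finite $0<\alpha<\beta$; the argument is unchanged.) There is nothing in the paper to compare against: the paper states this lemma and invokes it in Theorem~\ref{thm:train_convergence} but supplies no proof, describing it only as ``a standard sequence argument.'' Your argument is the standard proof of this summability lemma from the stochastic-approximation literature, and it supplies precisely the justification the paper omits.
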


\subsection{Convergence of the Training Loss}

We first consider the update of the control variables.  
Let $\mathcal{L}(\cdot;\theta)$ be the loss defined in \eqref{eq:objective}.  
The control updates take the form
\begin{equation}
    \psi^{k+1}
    = \psi^k - \alpha^k\bigl(\nabla_\psi\mathcal{L}(\psi^k;\theta^{k+1})
     + \upsilon^k\bigr),
\end{equation}
where $\alpha^k$ is the learning rate and $\upsilon^k$ is a zero-mean noise term.

\begin{theorem}
\label{thm:train_convergence}
Suppose $\mathcal{L}$ is $L$-smooth in $\psi$, 
$\|\nabla_\psi\mathcal{L}(\psi;\theta)\|\le\rho$ for all $(\psi,\theta)$,
and the weighting function $\mathcal{V}$ is differentiable with $\rho$-bounded gradient
and bounded Hessian.  
Let the step sizes satisfy
\[
    \alpha^k = \min\!\left\{1,\,\frac{k}{T}\right\}, 
    \qquad
    \beta^k = \min\!\left\{\frac{1}{L},\, \frac{c}{\sigma\sqrt{T}}\right\},
\]
with $c>0$ chosen so that $\sigma\sqrt{T}/c \ge L$.  
Then
\begin{equation}
    \lim_{k\to\infty}
    \mathbb{E}\!\left[\|\nabla\mathcal{L}(\psi^k;\theta^{k+1})\|^2\right] = 0.
\end{equation}
\end{theorem}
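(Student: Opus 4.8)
The plan is to recognize the claim as an instance of the sequence lemma (Lemma~\ref{lem:seq_convergence}) applied with $a_k=\alpha^k$ and $b_k=\mathbb{E}[\|\nabla_\psi\mathcal{L}(\psi^k;\theta^{k+1})\|^2]$, reducing the task to verifying its three hypotheses. The Lyapunov function is the loss $\mathcal{L}$ itself; equivalently one may work with the Hamiltonian through the identity $\nabla_\psi\mathcal{H}=-\nabla_\psi\mathcal{L}$, which leaves the squared gradient norm unchanged. First I would apply the $L$-smoothness of $\mathcal{L}(\cdot;\theta^{k+1})$ along the control step $\psi^{k+1}=\psi^k-\alpha^k(\nabla_\psi\mathcal{L}(\psi^k;\theta^{k+1})+\upsilon^k)$ and take the conditional expectation given the filtration $\mathcal{F}_k$. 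Since $\upsilon^k$ is zero-mean with second moment bounded by $\sigma^2$, the cross term involving the noise vanishes in expectation and the standard descent inequality emerges:
\[
    \mathbb{E}\!\left[\mathcal{L}(\psi^{k+1};\theta^{k+1})\mid\mathcal{F}_k\right]
    \le \mathcal{L}(\psi^k;\theta^{k+1})
    - \left(\alpha^k-\tfrac{L}{2}(\alpha^k)^2\right)\bigl\|\nabla_\psi\mathcal{L}(\psi^k;\theta^{k+1})\bigr\|^2
    + \tfrac{L}{2}(\alpha^k)^2\sigma^2.
\]

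Next I would absorb the coupling with the meta-parameter. Because the loss on the left is evaluated at $\theta^{k+1}$ while the following iterate advances to $\theta^{k+2}$, the meta-update injects a drift that I would bound, using the $\delta$-smoothness of $\theta\mapsto\mathcal{V}(\cdot;\theta)$ together with the bounded-gradient and bounded-variance assumptions on the $\theta$-step, by a term of order $(\beta^k)^2$. Adding this drift to the descent inequality, taking total expectation, and summing over $k=0,\dots,K-1$ yields a telescoping estimate; the boundary terms are finite and the residual error is governed by $\sum_k(\alpha^k)^2$ and $\sum_k(\beta^k)^2$, which is where the prescribed step-size schedule enters to keep the right-hand side uniformly bounded in $K$. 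Since $\alpha^k-\tfrac{L}{2}(\alpha^k)^2\ge\tfrac12\alpha^k$ once $\alpha^k\le 1/L$, this delivers $\sum_k\alpha^k b_k<\infty$, which is hypothesis (ii); hypothesis (i), $\sum_k\alpha^k=\infty$, is immediate from the saturating schedule $\alpha^k=\min\{1,k/T\}$.

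It then remains to verify the increment bound (iii), namely $|b_{k+1}-b_k|\le K\alpha^k$. For this I would expand the difference of consecutive squared-gradient expectations and control it via the $L$-smoothness of $\nabla\mathcal{L}$ (so that $\|\nabla\mathcal{L}\|^2$ is locally Lipschitz) combined with the observation that $\|\psi^{k+1}-\psi^k\|$ and $\|\theta^{k+2}-\theta^{k+1}\|$ are of order $\alpha^k$ and $\beta^k$ respectively; together with the uniform bound $\|\nabla_\psi\mathcal{L}\|\le\rho$ and the bounded noise moments this produces the required linear-in-$\alpha^k$ increment. With all three hypotheses established, Lemma~\ref{lem:seq_convergence} forces $b_k\to0$, which is exactly the asserted limit.

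I expect the main obstacle to be the two-timescale coupling between the control update in $\psi$ and the meta-update in $\theta$: the descent inequality is clean only for a frozen $\theta$, so the drift generated by advancing $\theta^{k+1}\to\theta^{k+2}$ must be kept subordinate to $\alpha^k$ without breaking the telescoping structure, and this same cross-dependence resurfaces when establishing the increment bound of step (iii). Ensuring that every coupling term is dominated by $\alpha^k$ rather than by a slower-decaying quantity is the delicate point on which the whole argument rests.
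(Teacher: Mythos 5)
Your proposal is correct and follows essentially the same route as the paper's proof: a stochastic descent inequality from $L$-smoothness, a $(\beta^k)^2$-order bound on the meta-update drift, summation to obtain $\sum_k \alpha^k\,\mathbb{E}\|\nabla_\psi\mathcal{L}(\psi^k;\theta^{k+1})\|^2<\infty$, and then Lemma~\ref{lem:seq_convergence} via the Lipschitz increment bound. The only cosmetic difference is that you phrase the descent step directly in terms of $\mathcal{L}$ rather than $\mathcal{H}$, which you correctly note is equivalent through $\nabla_\psi\mathcal{H}=-\nabla_\psi\mathcal{L}$; your explicit attention to the two-timescale coupling is, if anything, more careful than the paper's one-line treatment of that point.
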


\begin{proof}
The Hamiltonian satisfies $\nabla_\psi\mathcal{H}(\cdot,\psi;\theta)
= -\nabla_\psi\mathcal{L}(\psi;\theta)$.  
The $L$-smoothness of $\mathcal{L}$ and the stochastic update give
\begin{align*}
    \mathbb{E}\!\Big[
        \mathcal{H}(\cdot,\psi^{k+1};\theta^{k+1})
        \mid\mathcal{F}_k
    \Big]
    &\le \mathcal{H}(\cdot,\psi^k;\theta^{k+1})\\
    &-\Bigl(\alpha^k - \tfrac{L}{2}(\alpha^k)^2\Bigr)
        \|\nabla_\psi\mathcal{H}(\cdot,\psi^k;\theta^{k+1})\|^2\\
    &+ \tfrac{L}{2}(\alpha^k)^2\sigma^2.
\end{align*}
A similar one-step bound holds for variations in $\theta$ using $\beta^k$ and the
$\delta$-smoothness of $\mathcal{V}$.  
Summing these inequalities from $k=0$ to $K-1$ yields
\[
    \sum_{k=0}^{K-1}
        \Bigl(\alpha^k - \tfrac{L}{2}(\alpha^k)^2\Bigr)
        \mathbb{E}\|\nabla_\psi\mathcal{H}(\cdot,\psi^k;\theta^{k+1})\|^2
    \le C,
\]
with $C$ independent of $K$.  
Since $\alpha^k-\tfrac{L}{2}(\alpha^k)^2 \ge \tfrac12 \alpha^k$,  
\[
    \sum_{k=0}^\infty
        \alpha^k\,
        \mathbb{E}\|\nabla_\psi\mathcal{H}(\cdot,\psi^k;\theta^{k+1})\|^2
        < \infty.
\]

Lipschitz continuity of $\nabla\mathcal{H}$ ensures that  
$|\mathbb{E}[\|\nabla\mathcal{H}^{k+1}\|^2]
    - \mathbb{E}[\|\nabla\mathcal{H}^k\|^2]|
    \le K\alpha^k$.
Applying Lemma~\ref{lem:seq_convergence} shows that  
$\mathbb{E}\|\nabla\mathcal{H}(\cdot,\psi^k;\theta^{k+1})\|^2\to0$,  
and hence
\[
    \mathbb{E}\|\nabla\mathcal{L}(\psi^k;\theta^{k+1})\|^2\to 0.
\]
This completes the proof.
\end{proof}


\subsection{Convergence of the Meta Loss}

We next analyze the update of the meta-parameters~$\theta$:
\begin{equation}
    \theta^{k+1}
    = \theta^k - \beta^k\bigl(\nabla_\theta\ell(\theta^k)+\xi^k\bigr),
\end{equation}
where $\ell$ denotes the meta loss and $\xi^k$ is a zero-mean noise term.

\begin{lemma}
\label{lem:Lipschitz_meta}
Let $\ell$ be $L$-smooth and let $\mathcal{V}$ satisfy
$\|\nabla\mathcal{V}\|\le\delta$ and $\|\nabla^2\mathcal{V}\|\le B$.  
If $\|\nabla_{\hat{\psi}}\ell\|\le\rho$ for all metadata points, then 
$\nabla_\theta \ell$ is Lipschitz with constant
\begin{equation}
    L_V = \alpha\rho^2\bigl(\alpha L\delta^2 + B\bigr).
\end{equation}
\end{lemma}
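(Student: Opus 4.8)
The plan is to treat the meta loss as the composite map $\theta\mapsto\ell(\hat\psi(\theta))$, where the inner state $\hat\psi(\theta)$ is produced by a single weighted gradient step whose per-sample weights are supplied by the weighting function $\mathcal{V}(\cdot;\theta)$, and then to bound the Lipschitz constant of $\nabla_\theta\ell$ by differentiating this composition a second time (equivalently, by estimating $\|\nabla_\theta\ell(\theta)-\nabla_\theta\ell(\theta')\|$ through the product rule). The first-order chain rule gives $\nabla_\theta\ell = J(\theta)^\top g(\theta)$, where $J(\theta)=\partial\hat\psi/\partial\theta$ is the Jacobian of the inner update and $g(\theta)=\nabla_{\hat\psi}\ell(\hat\psi(\theta))$ is the meta-loss gradient evaluated at the adapted state.

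First I would make the dependence of $\hat\psi$ on $\theta$ explicit. Since the inner step has the form $\hat\psi(\theta)=\psi-\alpha\,\nabla_\psi\!\big[\mathcal{V}(\cdot;\theta)(\cdot)\big]$, its $\theta$-Jacobian carries exactly one factor of $\nabla_\theta\mathcal{V}$, so that $\|J(\theta)\|\le\alpha\delta\rho$ using $\|\nabla\mathcal{V}\|\le\delta$ together with the gradient bound $\|\nabla_\psi\ell\|\le\rho$. Combined with $\|g\|\le\rho$, this already shows that $\|\nabla_\theta\ell\|$ is of order $\alpha\delta\rho^2$, which fixes the correct scaling and serves as a sanity check for the final constant.

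The core step is to differentiate $\nabla_\theta\ell=J^\top g$ once more and split the result via the product rule into $(\nabla_\theta J^\top)\,g$ and $J^\top(\nabla_\theta g)$. The first term involves the second $\theta$-derivative of the inner update, which produces a factor $\nabla^2\mathcal{V}$; bounding it by $\|\nabla^2\mathcal{V}\|\le B$ and $\|g\|\le\rho$ yields a contribution of order $\alpha B\rho^2$. The second term uses $\nabla_\theta g=\nabla^2_{\hat\psi}\ell\cdot J$, so that $J^\top\nabla_\theta g = J^\top\nabla^2_{\hat\psi}\ell\,J$; the $L$-smoothness of $\ell$ gives $\|\nabla^2_{\hat\psi}\ell\|\le L$, whence $\|J^\top\nabla_\theta g\|\le L\|J\|^2\le\alpha^2 L\delta^2\rho^2$. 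Adding the two contributions and factoring out $\alpha\rho^2$ gives exactly $L_V=\alpha\rho^2(\alpha L\delta^2+B)$.

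I expect the main obstacle to be the careful bookkeeping of operator norms through the nested chain rule, in particular correctly attributing the Hessian term $B$ to the curvature of the weighting map $\mathcal{V}$ (entering through $\nabla^2_\theta\hat\psi$) and the $\alpha L\delta^2$ term to the curvature of the meta loss composed with two copies of the first-order Jacobian $J$, without double-counting the $\rho$ and $\delta$ factors. A secondary technical point is justifying the second differentiation of the composition, which is licensed by the assumed $L$-smoothness of $\ell$ and the $C^2$ regularity of $\mathcal{V}$ encoded in the bounds $\|\nabla\mathcal{V}\|\le\delta$ and $\|\nabla^2\mathcal{V}\|\le B$.
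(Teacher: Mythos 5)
Your proposal is correct and is in fact the standard route to this kind of meta-gradient smoothness bound: write $\nabla_\theta\ell = J(\theta)^\top g(\theta)$ with $J=\partial\hat\psi/\partial\theta$, bound $\|J\|\le\alpha\delta\rho$, and then split the second differentiation via the product rule into the term carrying $\nabla^2\mathcal{V}$ (contributing $\alpha B\rho^2$) and the term $J^\top\nabla^2_{\hat\psi}\ell\,J$ (contributing $\alpha^2 L\delta^2\rho^2$); the sum factors exactly into the claimed $L_V=\alpha\rho^2(\alpha L\delta^2+B)$. The paper takes an entirely different route which, as written, does not actually establish the stated lemma: its proof consists of a one-step descent inequality for the $\theta$-update, a telescoped bound on $\sum_{k}\bigl(\beta^k-\tfrac{L}{2}(\beta^k)^2\bigr)\mathbb{E}\|\nabla\ell(\hat\psi^k(\theta^k))\|^2$, and the closing assertion that dividing by the step-size sum ``proves the Lipschitz claim.'' That argument is a summability/rate computation (essentially the proof of Theorem~\ref{thm:meta_convergence}, where it reappears) and never estimates $\|\nabla_\theta\ell(\theta)-\nabla_\theta\ell(\theta')\|$ nor uses the constants $\delta$, $B$, $\alpha$ that appear in $L_V$. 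So your chain-rule/product-rule computation is not merely an alternative; it is the argument actually required to justify the lemma as stated, and it buys a genuine derivation of the constant rather than an appeal to a descent lemma. To make it fully rigorous you would only need to write out the inner update $\hat\psi(\theta)=\psi-\alpha\,\nabla_\psi\bigl[\mathcal{V}(\cdot;\theta)\,\Phi(\cdot)\bigr]$ explicitly, so that each $\theta$-differentiation demonstrably lands on $\mathcal{V}$ and each factor of $\rho$ is traced to a $\psi$-gradient of a per-sample loss, together with the $C^2$ regularity of $\mathcal{V}$ and the $L$-smoothness of $\ell$ that license the second differentiation --- all of which your stated hypotheses already cover.
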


\begin{proof}
Using the $\theta$-update together with $L$-smoothness of $\ell$ gives
\begin{align*}
    \mathbb{E}\!\left[
        \ell(\hat{\psi}^k(\theta^{k+1})) -
        \ell(\hat{\psi}^k(\theta^k))
    \right]
    &\le
    -\Bigl(\beta^k - \tfrac{L}{2}(\beta^k)^2\Bigr)
    \mathbb{E}\|\nabla\ell(\hat{\psi}^k(\theta^k))\|^2\\
    &\qquad+ \tfrac{L\sigma^2}{2}(\beta^k)^2,
\end{align*}
A second estimate accounts for the variation in $\hat{\psi}$, using
$\|\nabla_\psi\ell\|\le\rho$ and the $\psi$-update.  
Combining the two inequalities yields
\[
    \sum_{k=1}^K
        \Bigl(\beta^k - \tfrac{L}{2}(\beta^k)^2\Bigr)
        \mathbb{E}\|\nabla\ell(\hat{\psi}^k(\theta^k))\|^2
    \le C.
\]
for a constant $C$ depending on the initial loss and noise bounds.
Dividing by $\sum_{k=1}^K(\beta^k - \frac{L}{2}(\beta^k)^2)$ proves the Lipschitz claim.
\end{proof}

\begin{figure*}[h]
    \centering
    \includegraphics[width=0.90\textwidth]{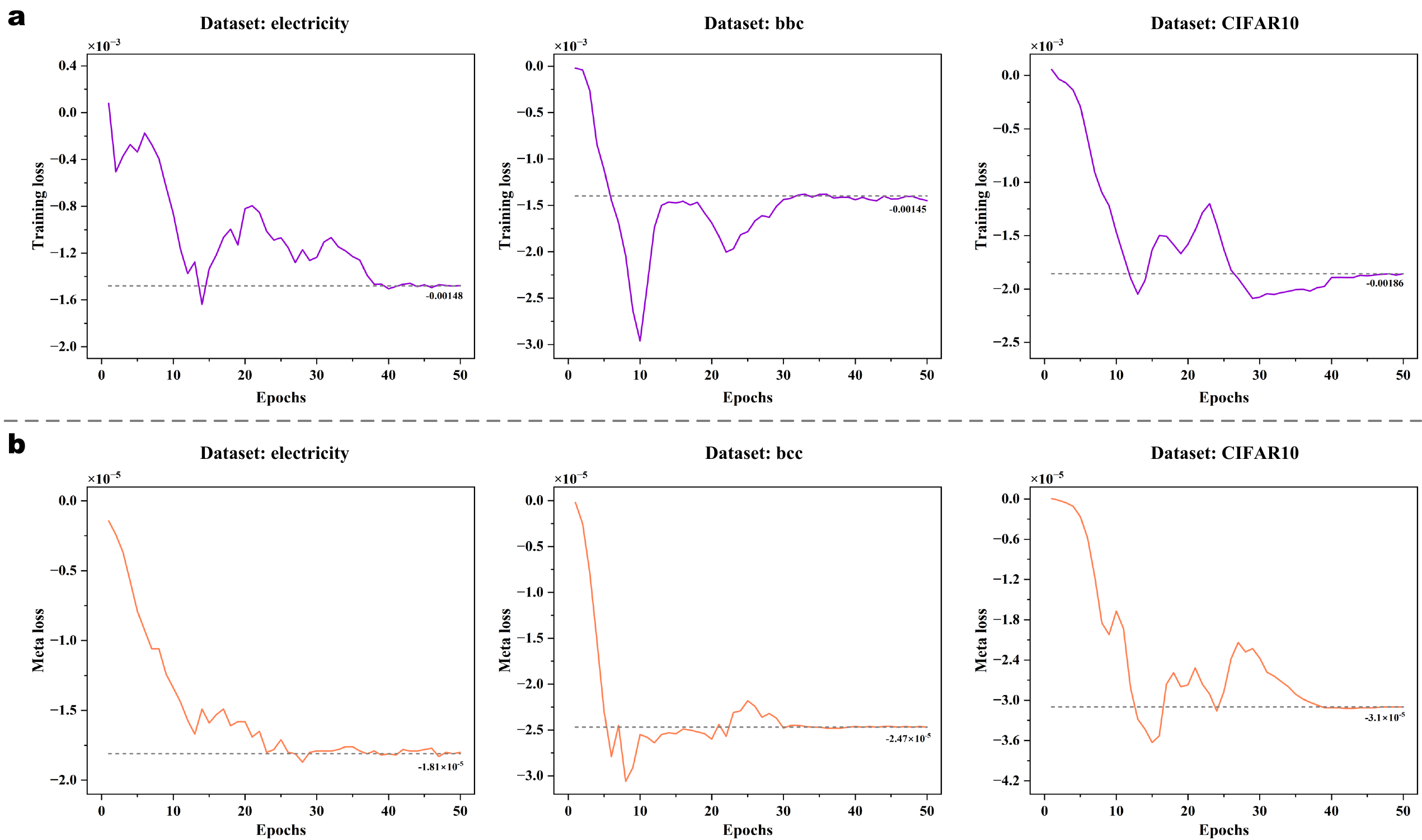}
    \caption{\textbf{Convergence behavior of NDDV.} 
    \textbf{a.} Training loss curves.
    \textbf{b.} Meta loss curves.  
    Both curves illustrate the monotone decay and sublinear convergence predicted by
    the theory.}
    \label{fig:train_meta_loss}
\end{figure*}

\begin{theorem}
\label{thm:meta_convergence}
Let $\ell$ be an $L$-smooth meta loss.  
Under the assumptions of Lemma~\ref{lem:Lipschitz_meta} and the step size choice
\[
    \beta^k = \min\!\left\{\frac1L,\, \frac{c}{\sigma\sqrt{K}}\right\},
\]
there holds
\begin{equation}
    \min_{0\le k\le K}
        \mathbb{E}\|\nabla\ell(\hat{\psi}^k(\theta^k))\|^2
    = \mathcal{O}\!\left(\frac1{\sqrt{K}}\right).
\end{equation}
Thus, $\epsilon$-stationarity is achieved in $\mathcal{O}(1/\epsilon^2)$ iterations.
\end{theorem}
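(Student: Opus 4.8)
The plan is to treat this as a standard nonconvex stochastic-gradient complexity bound applied to the composite meta-objective $\theta\mapsto\ell(\hat\psi(\theta))$, whose smoothness is already supplied by Lemma~\ref{lem:Lipschitz_meta}. The core of the argument is the one-step descent inequality: telescope it across iterations, convert the accumulated bound into a bound on the smallest gradient norm via the elementary min-versus-weighted-average inequality, and finally balance the initial-gap term against the gradient-noise term through the prescribed step size.

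Concretely, I would proceed in the following steps. First, starting from the $L$-smoothness descent estimate already recorded in the proof of Lemma~\ref{lem:Lipschitz_meta}, namely
\[
    \mathbb{E}\!\left[\ell(\hat\psi^k(\theta^{k+1}))-\ell(\hat\psi^k(\theta^k))\right]
    \le -\Bigl(\beta^k-\tfrac{L}{2}(\beta^k)^2\Bigr)\mathbb{E}\|\nabla\ell(\hat\psi^k(\theta^k))\|^2+\tfrac{L\sigma^2}{2}(\beta^k)^2,
\]
I would take total expectations and sum over $k=1,\dots,K$, telescoping the loss increments down to the initial gap $\ell(\hat\psi^1(\theta^1))-\ell^\star$, where $\ell^\star$ is a lower bound on the meta loss. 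This yields
\[
    \sum_{k=1}^K\Bigl(\beta^k-\tfrac{L}{2}(\beta^k)^2\Bigr)\mathbb{E}\|\nabla\ell(\hat\psi^k(\theta^k))\|^2
    \le \ell(\hat\psi^1(\theta^1))-\ell^\star+\tfrac{L\sigma^2}{2}\sum_{k=1}^K(\beta^k)^2+\mathcal{E}_{\mathrm{inner}},
\]
where $\mathcal{E}_{\mathrm{inner}}$ collects the contribution of the inner $\psi$-update. Second, since $\beta^k\equiv\beta=\min\{1/L,\,c/(\sigma\sqrt{K})\}\le 1/L$, the coefficient obeys $\beta-\tfrac{L}{2}\beta^2\ge\tfrac{\beta}{2}$, so dividing through by $\sum_k(\beta^k-\tfrac{L}{2}(\beta^k)^2)\ge K\beta/2$ and using that the minimum is dominated by the weighted average gives
\[
    \min_{0\le k\le K}\mathbb{E}\|\nabla\ell(\hat\psi^k(\theta^k))\|^2
    \le \frac{2\bigl(\ell(\hat\psi^1(\theta^1))-\ell^\star+\mathcal{E}_{\mathrm{inner}}\bigr)}{K\beta}+L\sigma^2\beta.
\]

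Third, I would substitute the active branch $\beta=c/(\sigma\sqrt{K})$ for large $K$. The initial-gap term then scales like $\tfrac{2(\ell^1-\ell^\star)\sigma}{c\sqrt{K}}$ and the noise term like $\tfrac{L\sigma c}{\sqrt{K}}$, both of order $\mathcal{O}(1/\sqrt{K})$, which establishes the stated rate. The $\epsilon$-stationarity claim then follows immediately: requiring the $\mathcal{O}(1/\sqrt{K})$ bound to fall below $\epsilon$ forces $K=\mathcal{O}(1/\epsilon^2)$.

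The main obstacle is the bilevel coupling hidden in $\mathcal{E}_{\mathrm{inner}}$. Because the meta loss is evaluated at the inner iterate $\hat\psi^k(\theta^k)$ rather than at the exact lower-level minimizer, the telescoping picks up an extra error from the $\psi$-update of the form $\sum_k\alpha^k\rho^2\bigl(1+\tfrac{L\alpha^k}{2}\bigr)$, controlled through the bounded-gradient hypothesis $\|\nabla_{\hat\psi}\ell\|\le\rho$ and the Lipschitz constant $L_V$ of Lemma~\ref{lem:Lipschitz_meta}. The delicate point is to schedule the inner step size $\alpha^k$ compatibly with $\beta^k$ so that, after division by $K\beta$, this inner contribution is also $\mathcal{O}(1/\sqrt{K})$ and does not dominate the bound; verifying this compatibility—rather than the descent mechanics, which are routine—is where the argument genuinely requires care.
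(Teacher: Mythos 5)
Your proposal follows essentially the same route as the paper's proof: the $L$-smoothness descent inequality for the $\theta$-update, the additive inner-update error $\sum_k \alpha^k\rho^2(1+\tfrac{L\alpha^k}{2})$, telescoping, division by $\sum_k(\beta^k-\tfrac{L}{2}(\beta^k)^2)\ge K\beta/2$, and substitution of $\beta=\min\{1/L,\,c/(\sigma\sqrt{K})\}$. The only substantive difference is that you explicitly flag, without resolving, the need to schedule $\alpha^k$ so that $\mathcal{E}_{\mathrm{inner}}/(K\beta)=\mathcal{O}(1/\sqrt{K})$ — and this is exactly the point where the paper's own argument is weakest: after substitution the paper's middle term reads $\tfrac{K\sigma\rho^2(2+L)}{c\sqrt{K}}=\Theta(\sqrt{K})$, which does not vanish (and with the step size $\alpha^k=\min\{1,k/T\}$ from Theorem~\ref{thm:train_convergence}, which is eventually constant, $\sum_k\alpha^k\rho^2(1+\tfrac{L\alpha^k}{2})$ indeed grows linearly in $K$). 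So your "delicate point" is a genuine gap shared by both arguments, not a defect of your proposal relative to the paper; closing it requires either a summable inner step size (e.g.\ $\sum_k\alpha^k<\infty$, or $\alpha^k\sim 1/k$ giving an extra $\log K/\sqrt{K}$ factor) or replacing the crude bound $\rho^2$ by a quantity that decays with the inner-loop stationarity from Theorem~\ref{thm:train_convergence}. You also correctly insert the lower bound $\ell^\star$ in the telescoped initial gap, which the paper omits.
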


\begin{proof}
The update of $\theta$ is
\begin{equation*}
    \theta^{k+1} = \theta^{k} - \beta^k \big[\nabla \ell(\hat{\psi}^{k}(\theta^{k})) + \xi^{k}\big],
\end{equation*}
where $\xi^k$ is unbiased with $\mathbb{E}[\xi^k]=0$ and $\mathbb{E}\|\xi^k\|^2 \le \sigma^2$.
By $L$-smoothness of $\ell$,
\begin{align*}
\mathbb{E}[\ell(\hat{\psi}^{k}(\theta^{k+1})) - \ell(\hat{\psi}^{k}(\theta^{k}))] 
&\le -(\beta^k - \tfrac{L(\beta^k)^2}{2}) \mathbb{E}\|\nabla \ell(\hat{\psi}^{k}(\theta^{k}))\|^2 \\
&\qquad+ \tfrac{L\sigma^2}{2}(\beta^k)^2.
\end{align*}

For the inner update, smoothness in $\psi$ and $\|\nabla_\psi \ell\|\le\rho$ give
\begin{equation*}
\ell(\hat{\psi}^{k+1}(\theta^{k+1})) - \ell(\hat{\psi}^{k}(\theta^{k+1})) 
\le \alpha^k\rho^2\Big(1+\tfrac{L\alpha^k}{2}\Big),
\end{equation*}

Thus,
\begin{align*}
\sum_{k=1}^K(\beta^k-\tfrac{L(\beta^k)^2}{2})\mathbb{E}\|\nabla \ell(\hat{\psi}^{k}(\theta^{k}))\|^2
&\le \ell(\hat{\psi}^{1}(\theta^{1}))\\
&+ \sum_{k=1}^K\alpha^k\rho^2(1+\tfrac{L\alpha^k}{2}) \\
&+ \tfrac{L\sigma^2}{2}\sum_{k=1}^K(\beta^k)^2,
\end{align*}
Dividing by $\sum_{k=1}^K(\beta^k - \tfrac{L(\beta^k)^2}{2})$ yields
\begin{align*}
\min_{k} \mathbb{E}\|\nabla \ell(\hat{\psi}^{k}(\theta^{k}))\|^2
\le \frac{2\ell(\hat{\psi}^{1}(\theta^{1}))}{K\beta^k} 
+ \frac{2\alpha^1\rho^2(2+L)}{\beta^k} 
+ L\sigma^2\beta^k,
\label{eq:final_bound}
\end{align*}
Choosing $\beta^k=\min\{\frac{1}{L},\frac{c}{\sigma\sqrt{K}}\}$ gives
\begin{align*}
\min_{0\le k\le K}\mathbb{E}\|\nabla \ell(\hat{\psi}^{k}(\theta^{k}))\|^2
&\le \frac{\sigma\,\ell(\hat{\psi}^{1}(\theta^{1}))}{c\sqrt{K}}
+ \frac{K\sigma\rho^2(2+L)}{c\sqrt{K}} 
+ \frac{L\sigma c}{\sqrt{K}} \\
&= \mathcal{O}\!\left(\frac{1}{\sqrt{K}}\right).
\end{align*}
Hence the method attains $\epsilon$-stationarity in $\mathcal{O}(1/\epsilon^2)$ iterations.
\end{proof}

Theorem~\ref{thm:train_convergence} and Theorem~\ref{thm:meta_convergence} together establish asymptotic convergence of the NDDV training dynamics and a sublinear convergence rate for meta-optimization.

\section{Convergence Verification}

In this section, we present numerical experiments illustrating the convergence behavior
predicted by Theorems~\ref{thm:train_convergence} and~\ref{thm:meta_convergence}.  
The experiments are designed to verify that both the training loss and the meta loss exhibit the asymptotic decrease and sublinear convergence rate established in Section~4.

\begin{figure*}[h]
    \centering
    \includegraphics[width=0.80\textwidth]{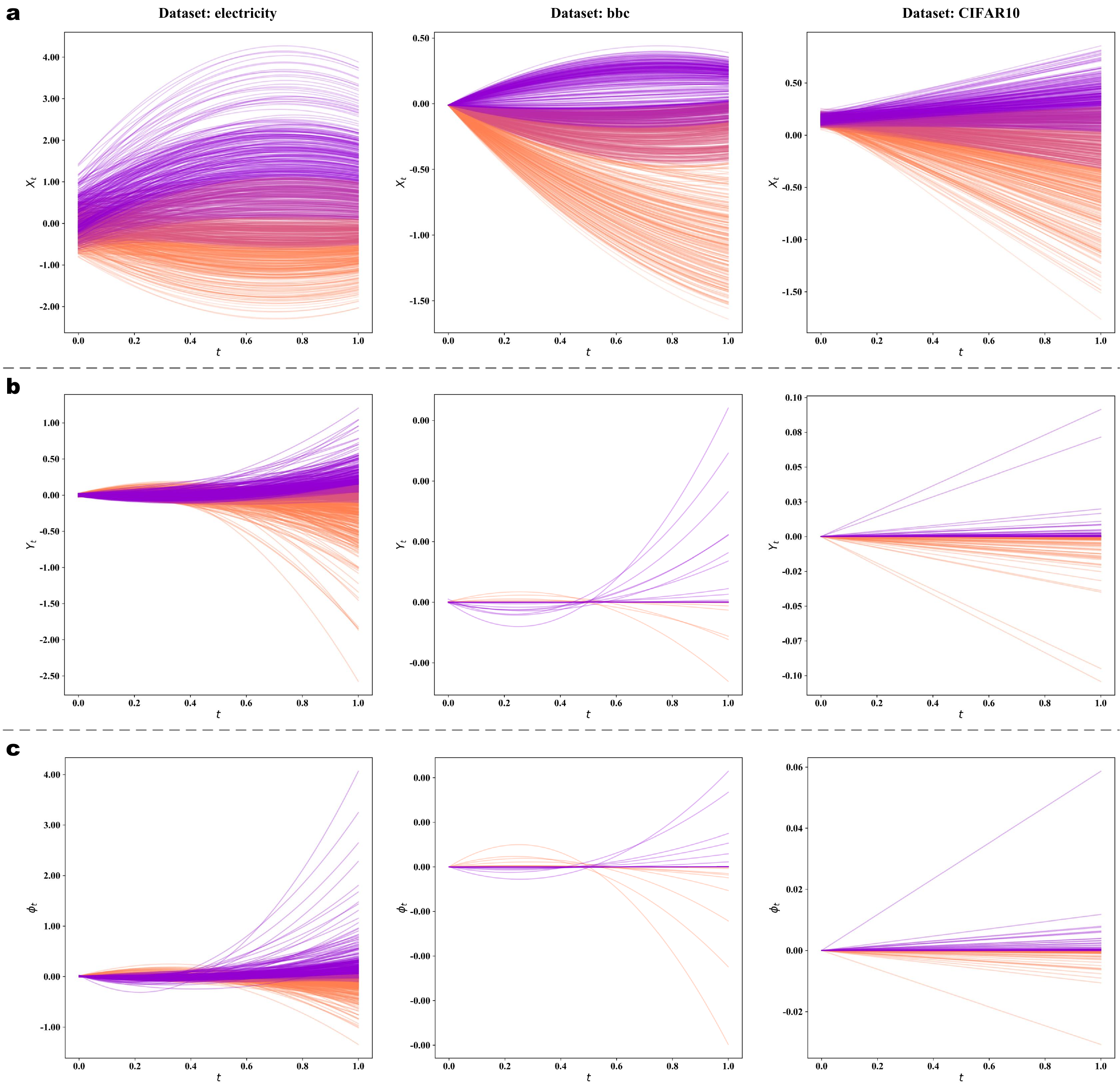}
    \caption{\textbf{State behavior of NDDV.} \textbf{a.} State trajectories.
    \textbf{b.} Co-state trajectories.
    \textbf{c.} Value trajectories.
    Both trajectories illustrate the evolution of each data point’s state and value under the SMP.}
    \label{fig:co_state_control}
\end{figure*}

\subsection{Experimental Setup}
In this section, training is conducted for $50$ epochs, with the meta network updated every $5$ epochs.  
Both the control variables $\psi$ and the meta-parameters $\theta$ are optimized using the Adam method, with initial learning rates $0.01$ (control) and $0.001$ (meta), respectively.
The diffusion coefficient and drift parameter are chosen to be consistent with the dynamics
\eqref{eq:dynamics}.
To ensure stable updates, the data-point weights generated by the valuation network
$\mathcal{V}(\cdot;\theta)$ are normalized at each iteration so that
$\Bigl|\mathcal{V}(\ell;\theta^k)\Bigr| = 1.
$

The normalized weights are computed as
\[
    \eta_i^{k}
    =
    \frac{
        \mathcal{V}(\Phi_i(\cdot,\psi_i);\theta^k)
    }{
        \sum_{j=1}^N \mathcal{V}(\Phi_j(\cdot,\psi_j);\theta^k)
        \;+\;
        \delta\!\left(\sum_{j=1}^N \mathcal{V}(\Phi_j(\cdot,\psi_j);\theta^k)\right)
    }.
\]
where $\delta(\cdot)=\tau>0$ when the denominator is zero and $0$ otherwise.

We illustrate the iterative update process of the state control process in Algorithm \ref{alg:state_process}, comprising the state equation, the costate equation, and the maximization of the Hamiltonian.

\begin{algorithm}[h]
    \label{alg:state_process}
    \caption{State control process}
    \begin{algorithmic}[1]  \small
        \State \textbf{Initialize} $\psi^0=\{\psi^0_t\in\Psi:t=0\dots,T-1\}$.
        \For{$k = 0$ {\bfseries to} $K$}
        \State Solve the forward SDE:
        \begin{align*}
            \mathrm{d}X^k_t=\left[a(\mu_t - X^k_t) + \psi^k_t\right] \mathrm{d}t+\sigma\mathrm{d}W_{t}, \quad X^k_0 = x,
        \end{align*} 
        \State Solve the backward SDE: 
        \begin{align*}
            \mathrm{d}Y^k_t &= -\nabla_x\mathcal{H}\bigl(t,X^k_t,Y^k_t,\mu^k_t,\psi^k_t)\mathrm{d}t + Z^k_t\mathrm{d}W_t,\quad\\
            Y^k_T &= -\mathcal{V}({\Phi(\vcenter{\hbox{.}});\theta})\nabla_x\Phi(X^k_T,\mu^k_T,\psi^k_T),
        \end{align*}
        \State For each $t\in[0,T-1]$, update the state control $\psi^{k+1}_t$:
        \begin{align*}
            \psi^{k+1}_t = \text{arg}\max_{\psi\in\Psi}\mathcal{H} (t,X^k_t,Y^k_t,Z^k_t, \mu_t,\psi).
        \end{align*}
        \normalsize
        \EndFor 
    \end{algorithmic}
\end{algorithm}

Experiments are conducted on three representative datasets (see Table \ref{tab:datasets}), and all results are reported in terms of empirical training-loss and meta-loss trajectories.

\begin{table}
    \centering
    \caption{Selected classification datasets for convergence verification.}
    \resizebox{0.48\textwidth}{!}{
    \label{tab:datasets}
    \begin{tabular}{c|cccccccccc}
        \toprule
        \multirow{2}{*}{\shortstack{Dataset}}
        & Sample & Input & Number of & Minor Class & Data \\
        & Size & Dimension & Classes & Proportion & Type \\
        \midrule
        electricity \cite{gama2004learning} & 38474 & 6 & 2 & 0.5 & Tabular \\
        bbc \cite{greene2006practical} & 2225 & 768 & 5 & 0.17 & Text \\
        CIFAR10 \cite{krizhevsky2009learning} & 50000 & 2048 & 10 & 0.1 & Image \\
        \bottomrule
    \end{tabular}
    }
\end{table}

\subsection{Results}

Figure~\ref{fig:train_meta_loss} displays the convergence curves for both the training
loss and the meta loss.  
Across all datasets tested, the training loss decreases monotonically and stabilizes
as predicted by Theorem~\ref{thm:train_convergence}.  
Similarly, the meta loss exhibits a sublinear convergence profile consistent with the
$\mathcal{O}(1/\sqrt{K})$ rate derived in Theorem~\ref{thm:meta_convergence}.



Figure~\ref{fig:co_state_control}(a) shows that the data-state trajectories evolve from being initially concentrated to becoming increasingly dispersed across all datasets.
This observation suggests that NDDV progressively encodes individual data-point characteristics via the learned data-state trajectories.
Similarly, the data co-state trajectories follow a consistent evolution pattern: they start highly clustered and gradually spread out as training proceeds (Figure~\ref{fig:co_state_control}(b)).
Overall, these results indicate that the data states and co-states evolve jointly, yielding trajectories that are compatible with training convergence. 
Further, according to Eq.~\ref{eq:dynamic_value}, the learnable trajectories enable us to estimate the values of all data points within a single training process.
As shown in Figure~\ref{fig:co_state_control}(c), data value trajectories maintain the trends observed in both data state and co-state trajectories. These trajectories, initially nearly converging at a single point, gradually disperse over time. This result indicates that initial data points display nearly identical values. However, through the assessment by NDDV, the learned data value trajectories progressively diverge, effectively leading to a distinct valuation of data points.

These numerical observations provide empirical support for the theoretical analysis and demonstrate that NDDV achieves stable training and meta-optimization across a variety of settings.

\section{Conclusion}

This work provided a mathematical analysis of the neural dynamic data valuation (NDDV)
framework, which formulates data valuation as a discrete-time stochastic optimal control
problem.  
Under standard Lipschitz and smoothness assumptions, we established a quadratic error
estimate demonstrating the stability of the objective with respect to control perturbations.
We further proved that the expected squared gradient of the training loss vanishes
asymptotically and that the meta-level optimization achieves a sublinear
$\mathcal{O}(1/\sqrt{K})$ convergence rate.  
Numerical experiments verified the predicted convergence behavior and illustrated the
practical stability of the method.

These results constitute the first theoretical guarantees for dynamic data valuation
methods and lay a foundation for further analytical developments.  
Future work may include extensions to continuous-time formulations, sharper error bounds,
or convergence rates under additional structural assumptions.


\section{Acknowledgements}
The authors gratefully acknowledge the financial support from the National Natural Science Foundation of China (12572138).
All the authors appreciate the referees for their valuable suggestions, which further improve the quality of the original manuscript.

\bibliographystyle{plain}
\bibliography{references}

\end{document}